\newtheorem{prop}{Proposition}
\newcommand{\N}[0]{\mathbb{N}}
\newcommand{\R}[0]{\mathbb{R}}
\newcommand{\E}[0]{\mathbb{E}}
\newcommand{\M}[0]{\mathbb{M}}
\newcommand{\bx}[0]{\pmb{x}}
\newcommand{\bX}[0]{\pmb{X}}
\newcommand{\bbX}[0]{\mathbb{X}}
\newcommand{\cX}[0]{\mathcal{X}}
\newcommand{\bbY}[0]{\mathbb{Y}}
\newcommand{\cY}[0]{\mathcal{Y}}
\title{
Show Your Work with Confidence: \\
Confidence Bands for Tuning Curves
}
\author{
  Nicholas Lourie \\
  New York University \\
  \texttt{nick.lourie@nyu.edu} \\\And
  Kyunghyun Cho \\
  New York University \& Genentech \\
  \texttt{kyunghyun.cho@nyu.edu} \\\And 
  He He \\
  New York University \\
  \texttt{hhe@nyu.edu} \\
}
\begin{document}

\maketitle

\begin{abstract}
    The choice of hyperparameters greatly impacts performance in natural language processing.
    Often, it is hard to tell if a method is better than another or just better tuned.
    \textit{Tuning curves} fix this ambiguity by accounting for tuning effort. Specifically, they plot validation performance as a function of the number of hyperparameter choices tried so far.
    While several estimators exist for these curves, it is common to use point estimates, which we show fail silently and give contradictory results when given too little data.
    
    Beyond point estimates, \textit{confidence bands} are necessary to rigorously establish the relationship between different approaches.
    We present the first method to construct valid confidence bands for tuning curves. The bands are exact, simultaneous, and distribution-free, thus they provide a robust basis for comparing methods.

    Empirical analysis shows that while bootstrap confidence bands, which serve as a baseline, fail to approximate their target confidence, ours achieve it exactly.
    We validate our design with ablations, analyze the effect of sample size, and provide guidance on comparing models with our method.
    To promote confident comparisons in future work, we release \texttt{opda}: an easy-to-use library that you can install with \texttt{pip}. \url{https://github.com/nicholaslourie/opda}
\end{abstract}

\section{Introduction}
\label{sec:introduction}

\begin{figure}[t]
    \centering
    \includegraphics[width=\linewidth]{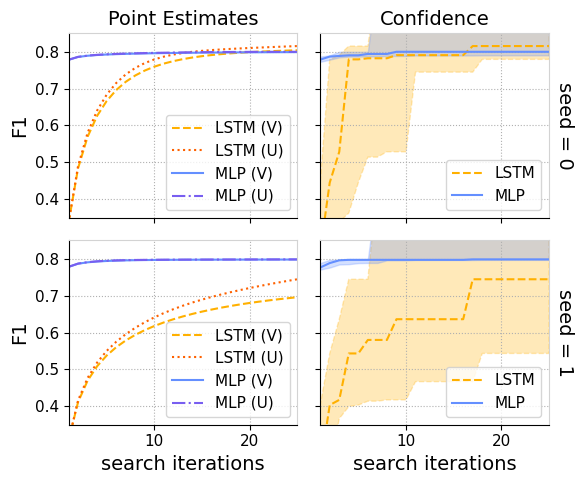}
    \caption{
        The charts compare tuning curves for MLP and LSTM text classifiers on Reuters \citep{apte-etal-1994-automated}, based on data from \citet{tang-etal-2020-showing}. The \textit{tuning curve} plots the F1 score of the best model after each round of random search. The left compares point estimates for the mean based on U and V-statistics; the right compares 50\% confidence bands for the median tuning curve. The top and the bottom run the same analysis on different samples of 25 search iterations. \textit{The point estimates give contradictory conclusions without warning on different samples, disagreeing whether the LSTM ever beats the MLP. The differences between estimators are small in comparison to this sample variation. Confidence bands, in contrast, directly show the variation due to sampling.}
    }
    \label{fig:comparing-with-point-estimates-vs-confidence}
\end{figure}

Accounting for hyperparameters when comparing models is an important, open problem in NLP and deep learning research. This problem is particularly relevant right now, as the rush to scale up has left us with large, costly language models and little understanding of how different designs compare. Indeed, many earlier models are now outperformed by others an order of magnitude smaller---the main difference: better hyperparameters. Even worse, the challenge of managing hyperparameters during research has produced false scientific conclusions, such as the belief that model size should scale faster than data \citep{hoffman2022training}. As a scientific community, we require more rigorous and reliable analyses for understanding if a model is well-tuned and how costly that process is.

To tackle this issue, researchers have developed the tuning curve \citep{dodge-etal-2019-show, tang-etal-2020-showing, dodge-etal-2021-expected-validation}. The \textit{tuning curve} plots the current best validation score as a function of tuning effort, for example the search iterations or total compute used in hyperparameter tuning (see Figure~\ref{fig:comparing-with-point-estimates-vs-confidence}). By comparing tuning curves, you can determine which method is best for a given budget or tell if the hyperparameters are fully tuned. 

In theory, tuning curves fix the hyperparameter comparison problem; in practice, these curves must be estimated from data. Prior work has developed efficient estimators for tuning curves \citep{dodge-etal-2019-show}; however, these estimators lack corresponding methods to quantify their uncertainty. Generic techniques, such as bootstrap resampling \citep{efron1994introduction}, break down for tuning curve estimates and thus fail to meaningfully capture their uncertainty \citep{tang-etal-2020-showing}. As a result, it is hard to know whether a conclusion is trustworthy or if more data is required (\S\ref{sec:analysis:comparisons-to-existing-methods}). The absence of confidence bands has hindered tuning curves' more widespread adoption. It is this gap in the literature we seek to fill.

To address these shortcomings, we present the first confidence bands for tuning curves. Where the bootstrap breaks down, these confidence bands achieve meaningful coverage---both empirically and provably. Beyond this basic necessity, they also have desirable theoretical properties; namely, they are \textit{exact}, \textit{simultaneous}, and \textit{distribution-free}.

Being \textit{exact}, the bands contain the true tuning curve with precisely the prescribed probability---even in finite samples. Thus, at 95\% confidence, they have 95\% \textit{coverage}, or contain the curve 95\% of the time. In contrast, other confidence bands are often \textit{conservative}, containing the true curve more often than claimed, or \textit{asymptotic}, attaining correct coverage only as the sample size goes to infinity.

Being \textit{simultaneous}, the bands contain the entire tuning curve at once. This guarantee is markedly stronger than \textit{pointwise} coverage, or covering each point separately with the desired probability. For example, pointwise 95\% confidence bands would often fail to cover a part of the true curve, while simultaneous bands will rarely fail to do so.

And being \textit{distribution-free}, the bands are free from restrictive parametric assumptions. As long as the validation score has a continuous distribution, the bands are exact; and even if the distribution is not continuous, the bands will still be conservative. As a result, the bands are guaranteed under general assumptions and thus widely applicable.

Thus, the confidence bands provide a scientific basis for evaluating models. The tuning curve captures tuning effort, while the confidence bands quantify uncertainty in the conclusion. Being exact, that quantification is \textit{precise}; being simultaneous, we can assess the model \textit{across all tuning budgets}; and being distribution-free, the results are \textit{reliable and robust}.

Our key insight is: take simultaneous confidence bands for the test scores' cumulative distribution function (CDF), then translate them to the tuning curve via an algebraic relationship (\S\ref{sec:theory:bounding-the-tuning-curve}). Namely, we start with nonparametric bounds for the CDF, translate those to bound the CDF of the best score for a given tuning budget, then leverage that to bound a summary of the best score's distribution (e.g., the median or the mean). Figure~\ref{fig:deberta-vs-debertav3} exhibits the end result: given competing models (DeBERTa and DeBERTaV3), we account for tuning effort by comparing tuning curves, and for sample variation by reporting them with confidence bands. In this way, the confidence bands empower researchers and practitioners to confidently identify the cost regimes where one method outperforms the other.

Complementing theoretical analysis, we study the confidence bands empirically. Even when point estimates agree, different samples give different results due to undetected sample variation---which the bootstrap fails to capture (\S\ref{sec:analysis:comparisons-to-existing-methods}). In contrast, our confidence bands attain exact coverage, both in theory and practice, as we confirm in experiments (\S\ref{sec:analysis:exact-coverage}). Beyond its core strategy, our approach also incorporates several more subtle design decisions. We show via ablation studies how each tightens the confidence bands  (\S\ref{sec:analysis:ablations}). Generalizing prior work, we then consider \textit{median} as well as \textit{mean} tuning curves, and find the median provides a more useful, interpretable, and tractable point of comparison (\S\ref{sec:analysis:mean-vs-median-tuning-curves}). Last, while confidence bands reveal sample variation, they do not eliminate it---so, we study the effect of sample size to guide how much data is necessary to estimate the tuning curve well (\S\ref{sec:analysis:effect-of-sample-size}).

Before diving into details, let us exhibit the end result. The next section demonstrates how to use the confidence bands in a practical scenario drawn from the recent literature: evaluating DeBERTaV3 \citep{he2021debertav3} against its baseline, DeBERTa \citep{he2021deberta} (\S\ref{sec:tutorial-evaluating-debertav3}). Bringing it all together, this tutorial shows how tuning curves promote reliable comparisons by accounting for tuning effort, while confidence bands reveal when a method is truly better versus when more data is required to reach a conclusion. Because our confidence bands are distribution-free with exact coverage, they provide a rigorous, statistical basis for comparing methods that involve hyperparameters, sampling, or random initialization. To promote reliable comparisons and more reproducible research, we release an easy-to-use library implementing our confidence bands at \url{https://github.com/nicholaslourie/opda}.

\begin{figure}[t]
    \centering
    \includegraphics[width=\linewidth]{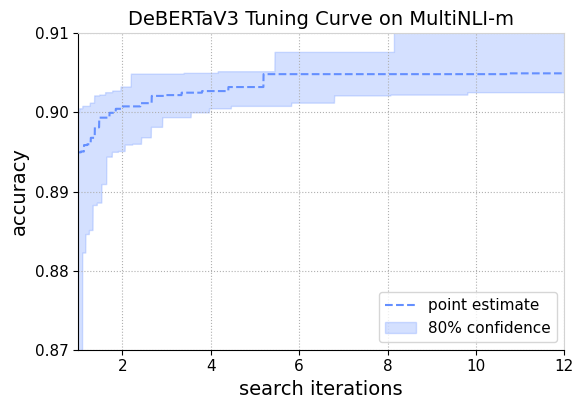}
    \caption{
        The median tuning curve for DeBERTaV3 on MultiNLI (matched), based on 48 search iterations. The point estimate plots the empirical CDF's tuning curve.
    }
    \label{fig:debertav3-tuning-curve}
\end{figure}

\section{Tutorial: Evaluating DeBERTaV3}
\label{sec:tutorial-evaluating-debertav3}

Let's walk through a case study fine-tuning a pretrained model, DeBERTaV3 \citep{he2021debertav3}, to see how our analysis works in practice.

First, we design the search distribution. Just like grid search, for each hyperparameter we choose a linear or log scale, pick upper and lower bounds, then set a (log) uniform distribution between them. For DeBERTaV3, we use a log scale for the learning rate (1e-6, 1e-3), and a linear scale for number of epochs (1, 4), batch size (16, 64), proportion of the first epoch for learning rate warmup (0, 0.6), and dropout (0, 0.3). With this distribution, we then run 48 rounds of random search.

From these samples, we construct the tuning curve and confidence bands in Figure~\ref{fig:debertav3-tuning-curve}. This plot quantifies the model's performance as a function of tuning budget. From it, practitioners can tell if the model is cost effective, and researchers can judge whether it is fully tuned. We see that DeBERTaV3 has high accuracy after only 2-6 rounds of search.

Beyond absolute judgments, we need relative comparisons. Figure~\ref{fig:deberta-vs-debertav3} plots DeBERTaV3 against its baseline, DeBERTa.\footnote{
DeBERTaV2 was only informally released, thus \citet{he2021debertav3} did not compare against it.
}
It is tempting to require that the bands have no overlap before deciding one model is better than the other; however, this rule is too conservative. Inspired by \citet{minka2002judging}, we (tentatively) suggest a heuristic: evidence is \textit{weak} if one band excludes the other's point estimate, \textit{fair} if each band excludes the other's point estimate, and \textit{strong} if the bands have no overlap, for a nontrivial portion of the curve. Thus, there is strong evidence DeBERTaV3 beats DeBERTa for all budgets.

After establishing DeBERTaV3 beats DeBERTa, we might ask: what makes the model work so well? We could test if a hyperparameter is important to tune, or ablate components to see what improves the score the most. Since ablations work like model comparisons, let us analyze a hyperparameter.

Hyperparameter importance can be defined in many ways \citep{hutter-etal-2014-efficient, van-rijn-etal-2018-hyperparameter, probst-etal-2019-tunability}. \citet{weerts2020importance} give an intuitive definition as \textit{tuning risk}: the difference in performance between tuning all hyperparameters and leaving one at its default. Tuning risk depends on tuning effort; let's measure it with tuning curves. Figure~\ref{fig:epoch-importance} compares tuning all hyperparameters to fixing epochs at its default. As epochs impacts the training time, we multiply the search iterations by an appropriate measure of cost (average epochs per iteration). Adjusted for compute, we find weak evidence that the default wins over tuning epochs.

\begin{figure}[t]
    \centering
    \includegraphics[width=\linewidth]{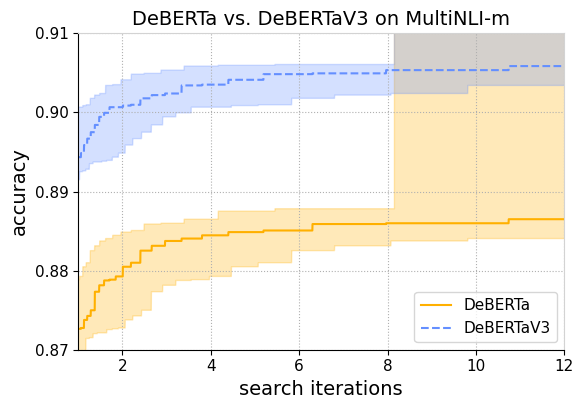}
    \caption{
        Median tuning curves on MultiNLI (matched), with 80\% confidence based on 48 search iterations. Point estimates plot the empirical CDFs' tuning curves.
    }
    \label{fig:deberta-vs-debertav3}
\end{figure}

\begin{figure}[t]
    \centering
    \includegraphics[width=\linewidth]{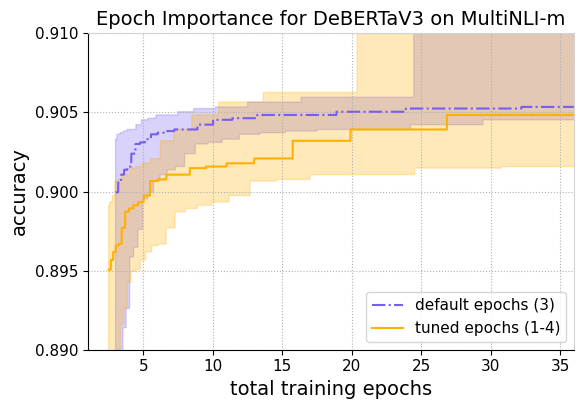}
    \caption{
        Median tuning curves on MultiNLI (matched), with 80\% confidence based on 48 search iterations. To assess hyperparameter importance, the curves compare tuning epochs (1-4) against leaving it at the default (3).
    }
    \label{fig:epoch-importance}
\end{figure}

\section{Theory}
\label{sec:theory}

We derive confidence bands for tuning curves that are simultaneous and distribution-free. These bands are conservative for the mean tuning curve and exact for the median tuning curve when the score distribution is continuous. See \S\ref{app:proofs} for proofs.

\subsection{Formalizing the Problem}
\label{sec:theory:formalizing-the-problem}

Almost every NLP method comes with a number of hyperparameters. These hyperparameters can take any kind of value: categorical, ordinal, complex. Together, they define the \textit{hyperparameter search space}, $\bx = [x_1, \ldots, x_d] \in \bbX$. Given a choice of hyperparameters, $\bx$, evaluating the method yields a real-valued score, $y \in \bbY \subset \R$, like accuracy.

Tuning algorithms search for the best possible hyperparameters by evaluating many choices. In general, the choices, $\bX_1, \ldots, \bX_n$, and their scores, $Y_1, \ldots, Y_n$, are random variables. Almost all of the search's expense comes from training models, so the number of evaluations, $n$, is a good proxy for cost. When comparing models of different sizes, this number should be multiplied by the average cost per evaluation (for example, in FLOPs), to ensure a fair comparison. In this way, the cost-benefit trade-off between tuning effort and task performance is captured by the \textit{tuning process}:
\begin{equation}\label{eq:tuning-process}
    T_k \coloneqq \max_{i=1\ldots k} Y_i
\end{equation}
or, the best performance after each evaluation.

Random search samples choices independently from a user provided \textit{search distribution}, $\bX_i \sim \cX$. This distribution over choices induces the \textit{score distribution} over the test metric: $Y_i \sim \cY$. Since every round's score is independent and identically distributed, each $T_k$ is just the max from a sample of size $k$. So, if $F(y) = P(Y \leq y)$ is the CDF for $Y$, and $F_k(y) = P(T_k \leq y)$ is the CDF for $T_k$, then $P\left(\max_{i=1\dots k} Y_i \leq y\right) = P(Y \leq y)^k$ implies:
\begin{equation}\label{eq:cdf-of-max}
    F_k(y) = F(y)^k
\end{equation}
Using Equation~\ref{eq:cdf-of-max}, we extend the definition of $T_k$ to all positive real numbers $k \in \R_{>0}$ (rather than just natural numbers, $k \in \N$), by letting $T_k$ be the random variable with CDF $F(y)^k$.\footnote{
Note this definition loses the joint distribution, matching Equation~\ref{eq:tuning-process} only in the marginal distributions for each $T_k$.
}
We can now define the \textit{expected} tuning curve, $\tau_e(k) \coloneqq \E[T_k]$. While prior work focuses on the expected tuning curve, the \textit{median} tuning curve, $\tau_m(k) \coloneqq \M[T_k]$, has several advantages that we will explore (\S\ref{sec:analysis:mean-vs-median-tuning-curves}).

\subsection{Bounding the Tuning Curve}
\label{sec:theory:bounding-the-tuning-curve}

Our core strategy translates bounds on one round of random search into bounds on the best of $k$ rounds. \S\ref{sec:theory:bounding-the-cdf} will describe how to bound one round's CDF, $\widehat{F}^l(y) \leq F(y) \leq \widehat{F}^u(y)$, using the order statistics, $Y_{(1)}, \ldots, Y_{(n)}$ (where $Y_{(i)}$ is the $i$-th least value). Equation~\ref{eq:cdf-of-max} then translates these CDF bounds on $Y$ into CDF bounds on $T_k$:
\begin{equation}\label{eq:tuning-process-cdf-bounds}
    (\widehat{F}^l(y))^k \leq F_k(y) \leq (\widehat{F}^u(y))^k    
\end{equation}
As our method bounds the \textit{entire} CDF, it describes the full \textit{distribution} of outcomes from random search. By taking the mean or median, we can translate the upper and lower CDF bands into lower and upper confidence bands for the tuning curve.

\paragraph{Mean Tuning Curves.} 
The confidence bands for the CDF explored in \S\ref{sec:theory:bounding-the-cdf} place all probability mass on the order statistics and the bounds on $Y$. In this case, we can convert a CDF, $F_k$, into its expectation by summing over the order statistics and bounds weighted by their probability:
\begin{equation}\label{eq:cdf-to-expected-tuning-curve}
    \hat{\tau}_e(k) = \sum_{i=0}^{n+1} Y_{(i)} \left[\widehat{F}\left(Y_{(i)}\right)^k - \widehat{F}\left(Y_{(i-1)}\right)^k\right]
\end{equation}
where $Y_{(n+1)}$ is the upper bound on $Y$, $Y_{(0)}$ is the lower bound, and $F\left(Y_{(-1)}\right)$ is $0$ by convention. It is possible that $Y_{(n+1)} = \infty$, $Y_{(0)} = -\infty$, or both in which cases the mean could be $\infty$, $-\infty$, or undefined. If $Y$ has no finite upper bound, then the upper confidence band for the tuning curve will be vacuous. Similarly, if $Y$ has no finite lower bound, the lower confidence band will be vacuous as well.

\paragraph{Median Tuning Curves.}
In contrast, the median curve does not require finite bounds on $Y$ to yield meaningful confidence bands. We convert the CDF, $F_k$, into the median tuning curve directly by:
\begin{equation}\label{eq:cdf-to-median-tuning-curve}
    \hat{\tau}_m(k) = \min \left\{Y_{(i)} \mid 0.5 \leq \widehat{F}\left(Y_{(i)}\right)^k \right\}
\end{equation}
If $Y$ has no finite bounds, then the upper and lower confidence limits will take finite values until the bands' endpoints, where they will diverge to $\pm \infty$. Since the CDF confidence bands are discrete, the median is generally ambiguous. Our definition takes the minimum value above at least 50\% of the probability mass. This choice is motivated by the classic definition of the quantile function, $Q(p) \coloneqq \inf \{y \in \bbY \mid p \leq F(y) \}$, which makes it a left inverse of the CDF everywhere that the distribution assigns nonzero probability mass, so $Q(F(Y)) = Y$ holds with probability one.

\subsection{Bounding the CDF}
\label{sec:theory:bounding-the-cdf}

Our method for bounding the tuning curve (\S\ref{sec:theory:bounding-the-tuning-curve}) requires simultaneous CDF bands as input. The literature offers several ways to construct these.

Given $n$ independent random variables, $Y_i \sim \cY$, we can approximate their CDF via the \textit{empirical cumulative distribution function} (eCDF):
\begin{equation}\label{eq:ecdf}
    \widehat{F}(y) \coloneqq \frac{1}{n} \sum_{i=1}^{n} \mathbb{1}[Y_i \leq y]
\end{equation}
By the Glivenko-Cantelli theorem, this converges uniformly, almost surely to the CDF \citep{vaart_1998}.

\paragraph{DKW Bands.} 
The Dvoretzky-Kiefer-Wolfowitz (\textit{DKW}) inequality characterizes this convergence rate \citep{dvoretzky-etal-1956-asymptotic, massart-1990-tight}, $\epsilon > 0$:
\begin{equation}\label{eq:dkw-inequality}
    P\left(\sup_{y \in \bbY} \left|\widehat{F}(y) - F(y)\right| > \epsilon \right) \leq 2e^{-2n\epsilon^2}
\end{equation}
In particular, the inequality bounds how far the eCDF is from the true CDF with high probability.

Setting the left to $1-\alpha$ then solving for $\epsilon$ yields simultaneous $1-\alpha$ confidence bands for the CDF:
\begin{equation}\label{eq:dkw-confidence-bands}
    \widehat{F}(y) - \epsilon \leq F(y) \leq \widehat{F}(y) + \epsilon; \hspace{0.5em} \epsilon \coloneqq \sqrt{\frac{\log \frac{2}{\alpha}}{2n}}
\end{equation}
These DKW bands (and the inequality) are tight asymptotically, but conservative in finite samples.

\paragraph{KS Bands.}
The Kolmogorov-Smirnov (\textit{KS}) test tightens the DKW inequality for finite samples by taking the supremum as a test statistic:
\begin{equation}\label{eq:ks-statistic}
   D_n \coloneqq \sup_{y \in \bbY} \left|\widehat{F}(y) - F(y)\right|
\end{equation}
If $F$ is the true CDF and continuous, then $D_n$'s distribution does not depend on $Y$'s. In that case, $D_n$ has a (two-sided) KS distribution, and the KS test is exact and distribution-free \citep{bradley1968distribution}. We can invert it to construct simultaneous, exact, and distribution-free confidence bands for the CDF.

The DKW inequality and KS test are central tools in nonparametric statistics. The first offers a simple, closed-form formula, while the second is tighter, quick to compute, and widely implemented in software. Still, both methods share a major shortcoming: because the bands have constant width over the CDF, they are violated more often near the median. As a result, they are wider than necessary, especially at the extremes---and it is the extremes which most impact the tuning curve.

\paragraph{LD Bands.}
To fix this issue, \citet{learned-miller-etal-2008-probabilistic} derived simultaneous confidence bands that are violated equally often at all points, and thus are much narrower at the extremes. The Learned-Miller-DeStefano (LD) bands are based on the \textit{order statistics}: $Y_{(1)}, \ldots, Y_{(n)}$, where $Y_{(i)}$ is the sample's $i$-th smallest number. The basic idea behind the derivation is: first bound the CDF at the order statistics, then extend those bounds to the rest of the function.

Let us first bound the CDF at the order statistics. Consider a random variable $Y$. If $Y$ is continuously distributed and $F$ its CDF, then $F(Y)$ is uniformly distributed between $0$ to $1$. Since the CDF is always increasing and thus order preserving, $F\left(Y_{(i)}\right)$ is the $i$-th order statistic of $F\left(Y_{(1)}\right), \ldots, F\left(Y_{(n)}\right)$. As a result, $F\left(Y_{(i)}\right)$ is distributed as the uniform's $i$-th order statistic, or $\text{Beta}(i, n{+}1{-}i)$. Then any interval containing $1-\alpha'$ of the Beta distribution's probability mass is a $1-\alpha'$ confidence interval for the CDF at $Y_{(i)}$. Let the limits of these intervals be $l_{(i)} \leq F\left(Y_{(i)}\right) \leq u_{(i)}$. The LD bands set $\alpha'$ so that these pointwise intervals hold simultaneously with probability $1-\alpha$.

Once we have bounds at the order statistics, we can extend them across the rest of the CDF. Since the CDF is monotonic, the lower bound on $F\left(Y_{(i)}\right)$ extends to the right and the upper bound on $F\left(Y_{(i)}\right)$ extends to the left. Formally, $Y_{(i)} < y$ implies that $l_{(i)} \leq F\left(Y_{(i)}\right) \leq F(y)$ and $y < Y_{(i)}$ implies that $F(y) \leq F\left(Y_{(i)}\right) \leq u_{(i)}$. In this way, the confidence intervals at the order statistics bound the \textit{entire} CDF.

The LD bands are not widely implemented---perhaps in part because they are hard to compute. The main difficulty is finding $\alpha'$. \citet{learned-miller-etal-2008-probabilistic} recommend adjusting $\alpha'$ (e.g., using binary search) until you have obtained the desired coverage. This coverage is measured in simulation. Since each iteration constructs the confidence bands thousands of times, this approach ends up being very time consuming. Instead, we reformulate the confidence bands as a hypothesis test and directly simulate the test statistic's null distribution.

Specifically, we define the test statistic:
\begin{equation}\label{eq:l-statistic}
    L_n \coloneqq \max_{i=1\ldots n} B_{i}\left(F\left(Y_{(i)}\right)\right)
\end{equation}
where $B_i(p)$ is the coverage under $\text{Beta}(i, n{+}1{-}i)$ of the smallest interval containing $p$. In general, you could consider different types of intervals. The \textit{highest probability density interval} produces the best confidence bounds (\S\ref{sec:analysis:ablations}), though the \textit{equal-tailed interval} is easier to compute.\footnote{
For algorithms to compute $B_i(p)$ for both, see \S\ref{app:additional-algorithms}.
}
If $\cY$ is a continuous distribution, then $F\left(Y\right)$ is uniformly distributed between $0$ and $1$ and $L_n$ always has the same distribution. We compute significance levels by simulating this distribution using the uniform.

Given this test, we invert it to create confidence bands: First, find the $1-\alpha$ quantile for $L_n$, next use it to create confidence intervals for each $F\left(Y_{(i)}\right)$, and finally extend these intervals via monotonicity, as before. This approach requires only one round of simulation, rather than one for each step of a binary search---leading to a substantial speed up.

\section{Experimental Setup}
\label{sec:experimental-setup}

Our experiments address two kinds of questions:

\paragraph{Comparisons to Prior Work.} 
We adapt data\footnote{
\url{github.com/castorini/meanmax} (commit: 0ea1241)
} 
from \citet{tang-etal-2020-showing}, who identified many of the challenges inspiring our work. The data consists of 145 and 152 rounds of random search for MLP and LSTM classifiers' F1 scores on the Reuters document classification dataset \citep{apte-etal-1994-automated}. The hyperparameter search included learning rate, batch size, and dropout among others. Like \citet{tang-etal-2020-showing}, we measure the empirical coverage of bootstrap confidence bands in simulation. We fit kernel density estimates (KDE) to MLP and LSTM hyperparameter tuning data, to simulate random search while also being able to compute the true tuning curve in this simulation. We improve \citeposs{tang-etal-2020-showing} protocol by using a KDE method that includes a boundary correction, to account for F1 being bounded by 0 and 1. See \S\ref{app:experimental-setup:comparisons-to-prior-work} for details.

\paragraph{Analysis in a Realistic Scenario} 
In addition to the experiments above, we demonstrate our tool on a practical problem from the recent literature: comparing DeBERTaV3 \citep{he2021debertav3} to its baseline, DeBERTa \citep{he2021deberta}. Using the original implementation,\footnote{
\url{github.com/microsoft/DeBERTa} (commit: c558ad9)
} 
we trained DeBERTa and DeBERTaV3 on MultiNLI \citep{williams-etal-2018-broad} evaluated with accuracy over 1,024 rounds of random search. We tuned the dropout, learning rate, batch size, number of epochs, and warmup proportion of the first epoch for both models. Last of all, we fit a KDE model to this data for our exact coverage analysis (\S\ref{sec:analysis:exact-coverage}). See \S\ref{app:experimental-setup:analysis-in-a-realistic-nlp-scenario} for more details.

\section{Analysis}
\label{sec:analysis}

We validate our theory with empirical results.

\subsection{Comparisons to Existing Methods}
\label{sec:analysis:comparisons-to-existing-methods}

First, let us consider current practice. What tools are available to manage hyperparameters during research? Most commonly, researchers just tune all hyperparameters before making a comparison; however, efficient estimators for the tuning curve are already available. Still, prior work raises an important issue with these point estimates: when there is too little data, they give false conclusions silently \citep{tang-etal-2020-showing, dodge-etal-2021-expected-validation}. Unfortunately, it can be difficult to know when the data is too little. In principle, confidence bands could resolve this issue by warning when more data is required; however, looking at within sample variation does not help, since bootstrap confidence bands fail to achieve meaningful coverage \citep{tang-etal-2020-showing}. In contrast, our confidence bands come with strong theoretical guarantees. Let us see how they resolve these issues in practice.

\paragraph{Point Estimators' Drawbacks.} 
Experimental conclusions should not depend on the choice of estimator. Thus, if two estimators would disagree, the experiment should be redesigned. Perhaps the treatments should be changed to produce a greater effect, or the experiment needs a larger sample size.

Figure~\ref{fig:comparing-with-point-estimates-vs-confidence} exemplifies such a situation. In it, the left column shows point estimates, while the right shows confidence bands; and the top row uses one sample, while the bottom uses another. We see two point estimates disagree (top left): the V-statistic estimate \citep{dodge-etal-2019-show} and the U-statistic estimate \citep{tang-etal-2020-showing}. The V-statistic claims the LSTM outperforms the MLP after 20 iterations of random search. On the other hand, the U-statistic claims the LSTM outperforms after only 13, almost half. Even worse than this disagreement, running the same analysis on a second sample gives a totally different result (bottom left): the LSTM never beats the MLP at all.

In both cases, the point estimates disagree on details but reach the same general conclusion. In one, the LSTM eventually wins; in the other, it does not. Either way, we might not have questioned the results. In contrast, confidence bands warn of the contradiction and explain these disagreements (right): there just is not enough data for an answer. While point estimates fail silently, the confidence bands reveal issues in the experiment's design.

\paragraph{Shortcomings of the Bootstrap.}
Bootstrapping often offers an easy way to construct confidence bands \citep{efron1994introduction}; unfortunately, it fails for tuning curves. Typically, the bootstrap yields pointwise confidence bands that achieve the correct coverage asymptotically as the sample size goes to infinity. Nonetheless, while bootstrapping works under mild assumptions, \citet{tang-etal-2020-showing} showed tuning curves do not satisfy them.

Figure~\ref{fig:bootstrap-confidence-band-coverage} reproduces and expands their result, demonstrating it for the U-statistic as well as the V-statistic estimator. Each point along the X-axis corresponds to a point along the tuning curve. The Y-axis represents that point's (empirical) coverage, or percent of the time the bootstrap confidence bands contained it in simulations. At the start of the curve, the bootstrap bands get close to the desired coverage, but as the search iterations increase the coverage plummets.

The bootstrap bands and our confidence bands are not directly comparable because our bands aim for simultaneous rather than pointwise coverage. Still, while the bootstrap bands break down, ours live up to their guarantees and attain exactly the specified level of coverage, as we will see in \S\ref{sec:analysis:exact-coverage}.

\begin{figure}[t]
    \centering
    \includegraphics[width=\linewidth]{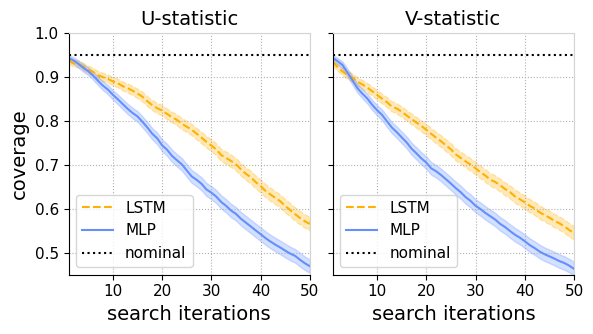}
    \caption{The pointwise coverage of 95\% bootstrap confidence bands constructed from 50 search iterations. The graphs show the coverage at each point of the tuning curve, measured in simulation. The shaded regions are 95\% Clopper-Pearson confidence intervals.}
    \label{fig:bootstrap-confidence-band-coverage}
\end{figure}

\subsection{Exact Coverage}
\label{sec:analysis:exact-coverage}

To test our theoretical guarantees, we measure the coverage empirically. The main challenge is that we never know the true tuning curve. Thus, we can never tell if the confidence bands actually cover it. To overcome this obstacle, we adapt the protocol from \citet{tang-etal-2020-showing}. Specifically, we construct a realistic simulation where we do know the true tuning curve, then use it to validate the coverage.

First, we run 1,024 rounds of random search on DeBERTa and DeBERTaV3. Next, we fit kernel density estimates to the scores from random search (as described in \S\ref{sec:experimental-setup}).\footnote{
KDE must be used instead of bootstrapping to avoid artifacts caused by resampling's discreteness (See \S\ref{app:experimental-setup} for details).
}
These kernel density estimates enable us to simulate each model's score distribution while knowing the true tuning curve exactly. Using the simulations, we then construct the confidence bands, check if they contain the true tuning curve, and record how often they do. Finally, we repeat this process 1,024 times to estimate the empirical coverage, or percent of the time the confidence bands contain the true curve.

Figure~\ref{fig:exact-tuning-curve-coverage} summarizes the results. The X-axis gives the desired confidence level, while the Y-axis shows the actual coverage. As theory predicts, the LD bands achieve exact coverage for the median tuning curve across all models and validation sets.

\begin{figure}[t]
    \centering
    \includegraphics[width=\linewidth]{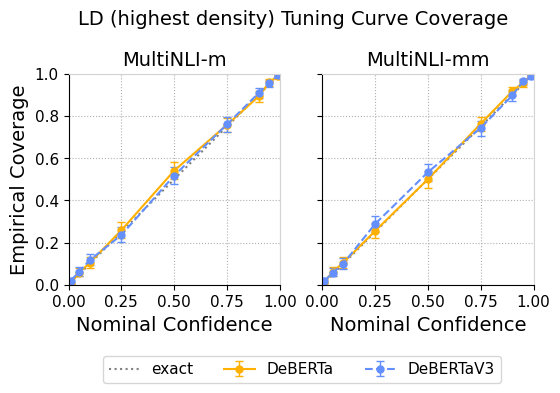}
    \caption{
        Nominal vs. empirical coverage of the LD median tuning curve confidence bands. Error bars show 99\% Clopper-Pearson confidence intervals. The results lie on the $y=x$ line, indicating exact coverage.
    }
    \label{fig:exact-tuning-curve-coverage}
\end{figure}

\subsection{Ablations}
\label{sec:analysis:ablations}

Our basic strategy converts \textit{any} simultaneous CDF bands into bounds on the tuning curve. The tuning curve bands' tightness depends heavily on the CDF bands' shape. Thus, the choice of CDF bands is an important design decision. Our recommended choice, the \textit{highest density LD bands}, gives a tighter bound over a greater range than the alternatives.

\begin{figure*}[t]
    \centering
    \begin{minipage}{0.49\textwidth}
        \centering
        \includegraphics[width=\linewidth]{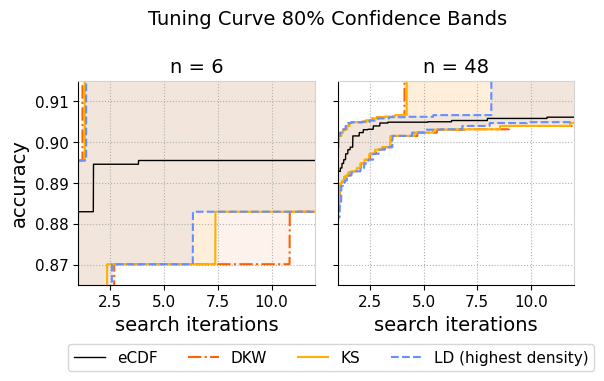}
        \caption{
            The DKW, KS, and LD confidence bands for DeBERTaV3's tuning curve on MultiNLI (matched).
        }
        \label{fig:dkw-vs-ks-vs-ld-tuning-curve-bands_small}
    \end{minipage}\hfill
    \begin{minipage}{0.49\textwidth}
        \centering
        \includegraphics[width=\linewidth]{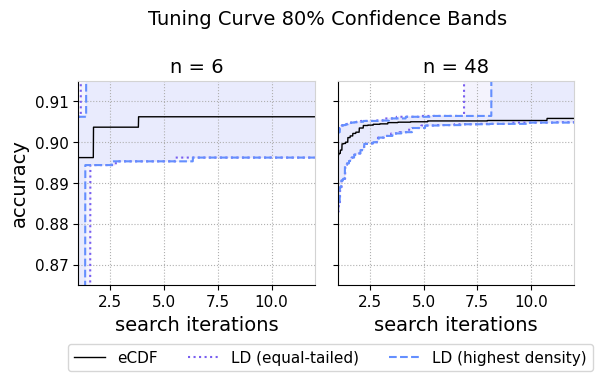}
        \caption{
            Equal-tailed and highest density LD bands for DeBERTaV3's tuning curve on MultiNLI (matched).
        }
        \label{fig:equal-tailed-vs-highest-density-tuning-curve-bands_small}
    \end{minipage}
\end{figure*}

\paragraph{DKW vs. KS vs. LD Bands.}
We recommend the LD bands over the DKW and KS bands (\S\ref{sec:theory:bounding-the-cdf}). While the DKW and KS bands are much better known, both are loose at the extremes due to their constant width over the CDF. In contrast, the LD bands narrow at the extremes, leading to much tighter bounds on the tuning curve for all but the smallest of sample sizes. Figure~\ref{fig:dkw-vs-ks-vs-ld-tuning-curve-bands_small} confirms this hypothesis by comparing the DKW, KS, and LD bands for DeBERTaV3 on MultiNLI. For both small (6) and large (48) samples, the LD bands offer tighter bounds over almost all of the curve.

\paragraph{Equal-tailed vs. Highest Density Bands.}
While \citet{learned-miller-etal-2008-probabilistic} originally constructed CDF bands from equal-tailed intervals, we recommend highest probability density intervals instead. Highest density intervals are more costly to compute, but also yield narrower CDF bands. The narrower CDF bands translate to better tuning curve bands, as shown in Figure~\ref{fig:equal-tailed-vs-highest-density-tuning-curve-bands_small}. In it, we see highest density and equal-tailed LD bands for the tuning curve of DeBERTaV3 on MultiNLI, using both small (6) and large (48) samples. Both bands have similar lower bounds; however, the highest density bands extend further right along the curve, bounding it over a greater range. This effect is even more pronounced in the larger sample.

\subsection{Mean vs. Median Tuning Curves}
\label{sec:analysis:mean-vs-median-tuning-curves}

While prior work focuses on the expected tuning curve, the median ($\M[T_k]$) has several advantages. First, the mean can be difficult to interpret, as we typically do not average over many searches and $T_k$ might have a skewed distribution. Interpreting the median, in contrast, is simple and straightforward: half the time you do better, half the time you do worse. Next, you can only construct nonparametric confidence bands for the mean when $T_k$ is globally bounded. Otherwise, an arbitrarily large number with an arbitrarily small probability could make the mean anything. Finally, even with global bounds, this issue causes the confidence bands for the mean to converge more slowly than those for the median, as shown in Figure~\ref{fig:median-vs-mean}: the median bands converge quickly on the initial part of the curve, while the mean bands remain loose. For all these reasons, we recommend the median over the mean.

\begin{figure}[t]
    \centering
    \includegraphics[width=\linewidth]{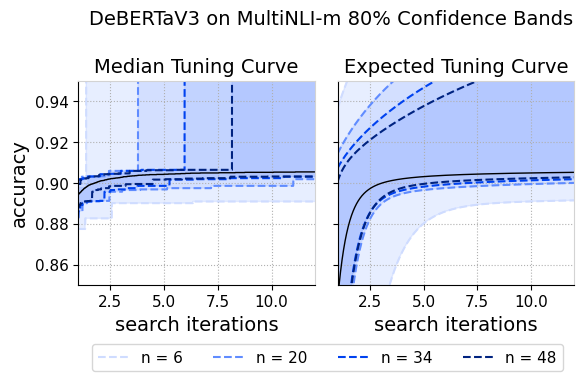}
    \caption{
        Confidence bands for the median vs. mean.
    }
    \label{fig:median-vs-mean}
\end{figure}

\subsection{Effect of Sample Size}
\label{sec:analysis:effect-of-sample-size}

In general, the tuning curve bands' width depends on the CDF's shape. The CDF bands extend a set length above and below the order statistics. Each point on the median bands comes from intersecting a horizontal line with the CDF bands. Thus, the steeper the CDF, the narrower this intersection, the tighter the bands. Intuitively, models that are easy to tune will have tighter median bands because they place more probability near the max, causing the CDF to rise steeply there.

Figure~\ref{fig:effect-of-sample-size} (left) shows how sample size affects this width by plotting DeBERTaV3's tuning curve on MultiNLI across sizes. The main effect extends the range over which the upper bound is non-trivial. Interestingly, increasing the sample size seems to linearly increase this range. Figure~\ref{fig:effect-of-sample-size} (right) plots this relationship, which explains more than 99.9\% of the variance. Thus, \textit{to bound the first $k$ iterations at 80\% confidence, you need about 6.25$k$ samples}.

\begin{figure}[t]
    \centering
    \includegraphics[width=\linewidth]{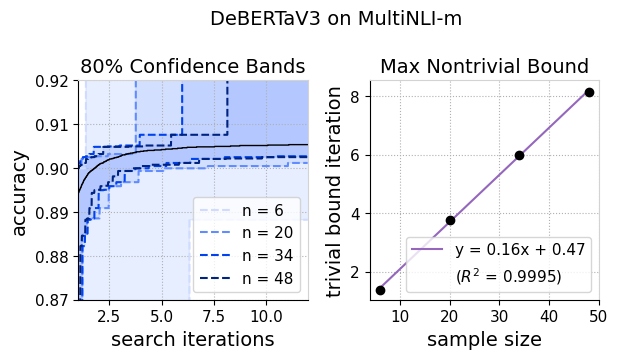}
    \caption{
        (left) Confidence bands for DeBERTaV3's tuning curve on MultiNLI (matched). $n$ represents the sample size, or the number of random search iterations used in constructing the confidence bands. (right) The relationship between the sample size and the search iteration at which the upper confidence band becomes trivial. The relationship is approximately linear, with 99.95\% of the variance explained.
    }
    \label{fig:effect-of-sample-size}
\end{figure}

\section{Related Work}
\label{sec:related-work}

Language models' unprecedented success creates the need for more realistic, comprehensive, and challenging evaluations \citep{ribeiro-etal-2020-beyond, bowman-dahl-2021-will}. As scaling reliably improves performance \citep{hestness-etal-2017-deep, rosenfeld2020a, kaplan-etal-2020-scaling, hernandez-etal-2021-scaling}, performance comparisons without consideration for cost have become inadequate \citep{ethayarajh-jurafsky-2020-utility}: there is always a bigger model with better performance. Thus, many researchers have brought attention to modern models' considerable costs, and proposed frameworks to account for them \citep{strubell-etal-2019-energy, sharir-etal-2020-cost, schwartz-etal-2020-green, henderson-etal-2020-towards}.

Due to hyperparameter search's heavy compute requirements, evaluations should identify not just the best model, but the best model for the tuning budget. Many efficient tuning algorithms have been proposed \citep{bergstra-etal-2012-random, snoek-etal-2012-practical, li-etal-2018-hyperband}; however, these find the best hyperparameters for a given method, rather than evaluate the method for a given budget. Beyond finding the best hyperparameters for \textit{deployment}, researchers and practitioners need systematic tools to manage them during \textit{development}---taking the guesswork out of evaluation. To meet this need, \citet{dodge-etal-2019-show} proposed the first estimator for the expected tuning curve, or mean performance as a function of hyperparameter search iterations.

To compare models fairly, we must fix a tuning algorithm. Random search is easy to implement, a strong baseline, and the basis of several state of the art techniques \citep{li-etal-2018-hyperband, li-etal-2020-system}; thus, it offers a great choice for standardizing tuning curves. Insightfully, \citet{dodge-etal-2019-show} leveraged the independent and identically distributed nature of random search to estimate tuning curves using V-statistics \citep{von-mises-1947-asymptotic}. V-statistics have many desirable properties; however, they can be biased, so \citet{tang-etal-2020-showing} introduced a complementary, unbiased estimator based on U-statistics \citep{hoeffding-1948-class}. Follow up work showed that although these estimators can disagree, neither is uniformly more correct \citep{dodge-etal-2021-expected-validation}. What is more, bootstrapping these estimators fails to create valid confidence bands \citep{tang-etal-2020-showing}. Consequently, it is difficult to know when an estimate is reliable, versus when more data is necessary. We resolve this issue by providing the first valid confidence bands for tuning curves.

\section{Conclusion}
\label{sec:conclusion}

We began with a tutorial on how to tell if a method is cost-effective, well-tuned, or actually better than a baseline, using our confidence bands (\S\ref{sec:tutorial-evaluating-debertav3}). Then, we derived our exact, simultaneous, distribution-free confidence bands for tuning curves (\S\ref{sec:theory}). To complement theory, we designed empirical studies (\S\ref{sec:experimental-setup}). These studies probed the shortcomings of existing solutions (\S\ref{sec:analysis:comparisons-to-existing-methods}), and confirmed our bands achieve the exact coverage necessary to address them (\S\ref{sec:analysis:exact-coverage}). Beyond these main results, ablations revealed how several key ideas tighten the bands (\S\ref{sec:analysis:ablations}). Lastly, we investigated the effect of sample size to show the benefits of the median over the mean tuning curve (\S\ref{sec:analysis:mean-vs-median-tuning-curves}) and illustrate an empirical relationship that informs how many samples are needed to produce useful confidence bands (\S\ref{sec:analysis:effect-of-sample-size}).

Hyperparameters complicate evaluation. Luckily, tuning curves let us compare methods while accounting for tuning effort. Still, point estimates leave open the question of whether or not the data is sufficient to support a given conclusion. To solve this issue, we present the first valid confidence bands for tuning curves. Our confidence bands are simultaneous, distribution-free, and achieve exact coverage in finite samples. Using these confidence bands, researchers and practitioners can compare methods reliably and reproducibly. To analyze your own experiments with confidence, try our library at \url{https://github.com/nicholaslourie/opda}.

\section*{Ethics Statement}
\label{sec:ethics-statement}

We hope our confidence bands will promote more reliable and reproducible work in NLP and related sciences. Confidence bands require more search iterations, thus more compute, than hyperparameter tuning; still, by reducing the frequency of faulty conclusions, we believe our confidence bands will ultimately save resources and drive better science.

\section*{Limitations}
\label{sec:limitations}

While our confidence bands have many strengths, they also have limitations. Mainly, they require the scores are independent and identically distributed (IID), as in random search. We view random search as an ideal tool for research. In development, it can measure tuning difficulty; in production, another algorithm can find optimal hyperparameters for deployment. Even so, a variant of random search, hyperband \citep{li-etal-2018-hyperband}, is competitive with state of the art. With only a small modification, it can satisfy the IID assumption: just fix ahead of time the thresholds for ending training early. Still, while random search excels in many applications, it breaks down if the intrinsic dimension of the search space is too high. Such models will require other techniques to assess tuning difficulty.

While this limitation is shared with prior work \citep{dodge-etal-2019-show, tang-etal-2020-showing, dodge-etal-2021-expected-validation}, our exact coverage guarantee also requires that the \textit{score} distribution is continuous (the \textit{search} distribution can be anything). If it is \textit{not} continuous, then our KS tuning curve bands may be used, as they are conservative for discrete distributions. It is worth investigating whether the LD bands are also conservative for discrete distributions. In general, for any simultaneous confidence bands for the CDF: if the CDF bands are exact, then the median tuning curve bands are exact as well.

All comparisons based on tuning effort share a common limitation: the tuning difficulty depends on the search space. With a small enough search space, a model will always tune faster. Thus, fair comparisons need fair search spaces. Following common practice, we chose each search space with intuition and past experience. While this introduces some subjectivity, it best reflects how people will actually tune the models. Still, those interested in fully objective over pragmatic comparisons might consider the growing literature on \textit{search space learning}, or data-driven methods to design search spaces \citep{perrone-etal-2019-learning, ariafar-etal-2022-predicting}.

\section*{Acknowledgements}
\label{sec:acknowledgements}

This work was supported in part through the NYU IT High Performance Computing resources, services, and staff expertise. This work was supported by Hyundai Motor Company (under the project Uncertainty in Neural Sequence Modeling), the Samsung Advanced Institute of Technology (under the project Next Generation Deep Learning: From Pattern Recognition to AI), and the National Science Foundation (under NSF Award 1922658).

\bibliography{anthology,custom}

\clearpage
\appendix

\section{Proofs}
\label{app:proofs}

If $Y$ has a continuous distribution, then both the KS and LD bands have exact coverage \citep{bradley1968distribution, learned-miller-etal-2008-probabilistic}. We will show if the CDF bands, $\widehat{F}^l(y) \leq F(y) \leq \widehat{F}^u(y)$, have exact coverage then the median tuning curve bands, $\hat{\tau}_m^l(k) \leq \tau_m(k) \leq \hat{\tau}_m^u(k)$, do as well:

\begin{prop}
If $\forall y, \widehat{F}^l(y) \leq F(y) \leq \widehat{F}^u(y)$ with probability $1-\alpha$, then with probability $1-\alpha$, $\forall k, \hat{\tau}_m^l(k) \leq \tau_m(k) \leq \hat{\tau}_m^u(k)$.
\end{prop}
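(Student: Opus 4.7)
The plan is to propagate the CDF-containment event through the two-step algebraic construction described in Section~\ref{sec:theory:bounding-the-tuning-curve}: first raise to the $k$-th power, then take the median. Let $E$ denote the event $\{\forall y,\ \widehat{F}^l(y) \leq F(y) \leq \widehat{F}^u(y)\}$, which by hypothesis has probability $1-\alpha$. It suffices to show that on $E$, the containment $\hat{\tau}_m^l(k) \leq \tau_m(k) \leq \hat{\tau}_m^u(k)$ holds simultaneously for every $k > 0$, since that makes the median-containment event a superset of $E$ and hence of probability at least $1-\alpha$.

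Conditional on $E$, fix $k > 0$. Since $F(y), \widehat{F}^l(y), \widehat{F}^u(y) \in [0,1]$ and $x \mapsto x^k$ is strictly increasing on $[0,1]$, the CDF inequality is preserved on raising to the $k$-th power, which is precisely Equation~\ref{eq:tuning-process-cdf-bounds}: $(\widehat{F}^l(y))^k \leq F_k(y) \leq (\widehat{F}^u(y))^k$ for all $y$. Next, I would invert these CDF bounds to bounds on the median via the standard antimonotonicity of the quantile function: if $G_1 \leq G_2$ pointwise, then $Q_1(p) \geq Q_2(p)$, where $Q_i(p) = \inf\{y : G_i(y) \geq p\}$. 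Applied at $p = 1/2$, this yields
\begin{equation*}
\inf\{y : (\widehat{F}^u(y))^k \geq 1/2\} \;\leq\; \tau_m(k) \;\leq\; \inf\{y : (\widehat{F}^l(y))^k \geq 1/2\}.
\end{equation*}

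The final step is to match these infima with the $\min$-over-order-statistics definitions in Equation~\ref{eq:cdf-to-median-tuning-curve}. Because $\widehat{F}^l$ and $\widehat{F}^u$ are right-continuous step functions jumping only at the order statistics $Y_{(i)}$, each set $\{y : (\widehat{F}^{?}(y))^k \geq 1/2\}$ is either empty or a right half-line $[Y_{(i^*)}, \infty)$, so its infimum is attained at an order statistic and equals $\min\{Y_{(i)} : 0.5 \leq (\widehat{F}^{?}(Y_{(i)}))^k\}$. This recovers $\hat{\tau}_m^l(k)$ (using $\widehat{F}^u$) and $\hat{\tau}_m^u(k)$ (using $\widehat{F}^l$); since $k$ was arbitrary, the simultaneous containment on $E$ holds, and the proposition follows.

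The main obstacle I anticipate is the bookkeeping in this last step: verifying $\inf = \min$ depends on the one-sided continuity convention used for the CDF bands, and the edge cases (empty thresholding sets giving $\hat{\tau}_m^u(k) = +\infty$, ties among order statistics, and the vacuous upper bound when $Y$ is unbounded above) need to be handled cleanly. A secondary subtlety---if one wants \emph{exact} rather than merely conservative coverage of the median bands---is establishing the reverse inclusion, namely that failure of CDF containment at some $y_0$ forces failure of median containment at the $k_0$ solving $F(y_0)^{k_0} = 1/2$; this direction requires continuity of $F$ so that such a $k_0$ exists and $y_0$ is actually achieved by the true tuning curve.
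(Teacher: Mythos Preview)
Your approach is essentially the paper's: raise the CDF bands to the $k$-th power, invert via the antimonotonicity of the quantile map, and then identify the resulting infima with the order-statistic minima because the bands are step functions. The paper carries out the forward direction by the same set-inclusion reasoning, just phrased directly on the finite sets $\{Y_{(i)} : 0.5 \le \widehat F^{\,?}(Y_{(i)})^k\}$ rather than passing through the general quantile lemma first.

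The one substantive correction concerns emphasis. Your opening sentence says it ``suffices'' to show that on $E$ the median bands hold, yielding probability \emph{at least} $1-\alpha$. But the proposition asserts probability \emph{exactly} $1-\alpha$, so the reverse inclusion is not a ``secondary subtlety'': it is half of the proof, and the paper treats it as such. Your sketch of that direction is correct and matches the paper's argument precisely---pick $y_0$ where the CDF band is violated, set $k_0 = \log_{F(y_0)}(1/2)$ so that $\tau_m(k_0)=y_0$, and use continuity of $F$ to guarantee such a $k_0$ exists. The paper additionally spells out the small bookkeeping step you allude to: since $\widehat F^l$ is constant between order statistics, the violation at $y_0$ transfers to the order statistic $Y_{(j)}$ immediately below $y_0$, which then witnesses $\hat\tau_m^u(k_0) \le Y_{(j)} < y_0 = \M[T_{k_0}]$. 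Promote this direction from an afterthought to a full half of the argument and your proof will match the paper's.
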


\begin{proof}
We will show the CDF bands on $Y$ hold if and only if the tuning curve bands hold. Since the CDF bands hold with probability $1-\alpha$, the tuning curve bands will then also hold with probability $1-\alpha$.

Recall how to construct the median tuning curve confidence bands. First, take some simultaneous confidence bands for the CDF of $Y$:
$$ \widehat{F}^l(y) \leq F(y) \leq \widehat{F}^u(y) $$
then, translate them into simultaneous confidence bands for the CDF of $T_k$ using Equation~\ref{eq:cdf-of-max}:
\begin{align*}
    \widehat{F}^l(y)^k &\leq F(y)^k \leq \widehat{F}^u(y)^k \\
    \widehat{F}^l(y)^k &\leq F_k(y) \leq \widehat{F}^u(y)^k
\end{align*}
Thus, the CDF bands on $Y$ hold if and only if the CDF bands on $T_k$ hold.

Finally, we take the median of the upper band to get a lower bound, and the median of the lower band to get an upper bound:
\begin{align*}
    \hat{\tau}_m^l(k) &\coloneqq \min \left\{Y_{(i)} \mid 0.5 \leq \widehat{F}^u\left(Y_{(i)}\right)^k \right\} \\
    \hat{\tau}_m^u(k) &\coloneqq \min \left\{Y_{(i)} \mid 0.5 \leq \widehat{F}^l\left(Y_{(i)}\right)^k \right\}
\end{align*}
Geometrically, this corresponds to drawing a horizontal line at 0.5 probability across the CDF plot, and finding where it intersects the confidence bands. Note that we only need to check the CDF bands at the order statistics since they are step functions that only change at those points.

Assume the CDF bands on $Y$ hold, then we have $\forall y, \widehat{F}^l(y)^k \leq F_k(y)$, so:
\begin{equation*}
    \begin{split}
        \Big\{Y_{(i)} \mid& 0.5 \leq \widehat{F}^l\left(Y_{(i)}\right)^k \Big\} \\
        &\subseteq \Big\{Y_{(i)} \mid 0.5 \leq F_k\left(Y_{(i)}\right) \Big\}
    \end{split}
\end{equation*}
Therefore:
\begin{equation*}
    \begin{split}
        \min \Big\{Y_{(i)} \mid& 0.5 \leq \widehat{F}^l\left(Y_{(i)}\right)^k \Big\} \\
        &\geq \min \Big\{Y_{(i)} \mid 0.5 \leq F_k\left(Y_{(i)}\right) \Big\}
    \end{split}
\end{equation*}
Thus:
$$ \hat{\tau}_m^u(k) \geq \M(T_k) $$
The case for $\hat{\tau}_m^l \leq \M(T_k)$ is analogous. So, if the CDF bands hold, then the tuning curve bands hold.

For the reverse implication, assume that the CDF bands on $Y$ do not hold, then there exists some $y$ such that either $\widehat{F}^l(y) > F(y)$ or $F(y) > \widehat{F}^u(y)$. We will show the case for $\widehat{F}^l(y) > F(y)$. First, let $k \coloneqq \log_{F(y)}(0.5)$, so in particular $F(y)^k = 0.5$, thus $\M[T_k] = y$. We have:
\begin{equation*}
    \widehat{F}^l(y)^k > F(y)^k = F(y)^{\log_{F(y)}(0.5)} = 0.5
\end{equation*}
Consider the order statistic, $Y_{(j)}$, immediately preceding $y$. Since the score distribution is continuous, with probability 1 we have $y \not= Y_{(j)}$ and thus $Y_{(j)} < y$. Because $\widehat{F}^l(y)$ is a step function that only changes at the order statistics, and $Y_{(j)}$ immediately precedes $y$, we must have:
\begin{equation*}
    \widehat{F}^l\left(Y_{(j)}\right)^k = \widehat{F}^l(y)^k > 0.5
\end{equation*}
Thus, $Y_{(j)} \in \left\{Y_{(i)} \mid 0.5 \leq \widehat{F}^l\left(Y_{(i)}\right)^k \right\}$, so:
\begin{align*}
    \hat{\tau}_m^u(k) &= \min \left\{Y_{(i)} \mid 0.5 \leq \widehat{F}^l\left(Y_{(i)}\right)^k \right\} \\
                      &\leq Y_{(j)} \\
                      &< y \\
                      &= \M[T_k]
\end{align*}
So, $\hat{\tau}_m^u(k) < \M[T_k]$ so the tuning curve confidence bands are violated. The other case looks similar.

Since the CDF bands for $Y$ hold if and only if the tuning curve bands hold, they hold with the same probability, $1-\alpha$.
\end{proof}

Next, we will show that the mean tuning curve confidence bands are conservative.

\begin{prop}
If $\forall y, \widehat{F}^l(y) \leq F(y) \leq \widehat{F}^u(y)$ with probability $1-\alpha$, then with probability greater than or equal to $1-\alpha$, $\forall k, \hat{\tau}_e^l(k) \leq \tau_e(k) \leq \hat{\tau}_e^u(k)$.
\end{prop}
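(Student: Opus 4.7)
The plan is to show the stronger containment statement that whenever the CDF bands contain the true CDF $F$ pointwise, the mean tuning curve bands contain $\tau_e(k)$ for every $k$. Since the CDF bands hold with probability $1-\alpha$ by assumption, this immediately gives the tuning curve bands at least that probability of holding. Unlike the median case in the previous proposition, I would not attempt the reverse implication, because many distinct CDFs can share the same mean --- which is precisely the source of the conservativeness (and why the conclusion is only $\geq 1-\alpha$ rather than an equality).

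First, I would invoke Equation~\ref{eq:tuning-process-cdf-bounds} to translate the CDF bands on $Y$ into CDF bands on $T_k$, yielding $(\widehat{F}^l(y))^k \leq F_k(y) \leq (\widehat{F}^u(y))^k$. The main tool is then a classical consequence of first-order stochastic dominance: if $G_1$ and $G_2$ are CDFs supported on a common interval with $G_1(y) \leq G_2(y)$ everywhere, integration by parts gives $\int y\, dG_1(y) - \int y\, dG_2(y) = \int (G_2(y) - G_1(y))\, dy \geq 0$, so the distribution with the smaller CDF has the larger mean. Applying this to both sides of the CDF inequality yields $\text{mean}(\widehat{F}^l(\cdot)^k) \geq \tau_e(k) \geq \text{mean}(\widehat{F}^u(\cdot)^k)$. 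Because $\widehat{F}^l$ and $\widehat{F}^u$ are step functions that change only at the order statistics and the bounds $Y_{(0)}, Y_{(n+1)}$, so too are their $k$-th powers. Consequently, their means are given exactly by the discrete sum in Equation~\ref{eq:cdf-to-expected-tuning-curve}, and the two extremal means coincide with $\hat{\tau}_e^u(k)$ and $\hat{\tau}_e^l(k)$, respectively.

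The main obstacle is handling the degenerate cases in which $Y$ has no finite upper or lower bound, so that one of the extremal means is $\pm\infty$. As the paper already notes in the discussion following Equation~\ref{eq:cdf-to-expected-tuning-curve}, the corresponding confidence band is vacuous in that regime, so the inequality $\hat{\tau}_e^l(k) \leq \tau_e(k) \leq \hat{\tau}_e^u(k)$ holds trivially; a brief case analysis formalizes this. Putting everything together, the event $\{\forall y,\, \widehat{F}^l(y) \leq F(y) \leq \widehat{F}^u(y)\}$ is contained in the event $\{\forall k,\, \hat{\tau}_e^l(k) \leq \tau_e(k) \leq \hat{\tau}_e^u(k)\}$, so the latter has probability at least $1-\alpha$.
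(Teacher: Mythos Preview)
Your proposal is correct and follows essentially the same approach as the paper: translate the CDF bands to $F_k$ via the $k$-th power, invoke first-order stochastic dominance to order the expectations, and conclude by event containment that the tuning curve bands hold with probability at least $1-\alpha$. If anything, you supply slightly more detail than the paper does (the integration-by-parts justification, the explicit identification with Equation~\ref{eq:cdf-to-expected-tuning-curve}, and the degenerate unbounded-support cases), but the argument is the same.
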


\begin{proof}
Given two CDFs, $F(y)$ and $G(y)$, let $F(y) \preceq G(y)$ denote that the distribution for $F(y)$ is less than or equal to $G(y)$ in the usual stochastic order (i.e., first-order stochastic dominance).

If $\forall y, \widehat{F}^l(y) \leq F(y) \leq \widehat{F}^u(y)$, then we have that $\forall y, \widehat{F}^l(y)^k \leq F_k(y) \leq \widehat{F}^u(y)^k$. This fact is equivalent to $\widehat{F}^u(y)^k \preceq F_k(y) \preceq \widehat{F}^l(y)^k$. It is then a standard fact that this implies the expectation of $\widehat{F}^l(y)^k$ is greater than the expectation of $F_k(y)$ which is greater than the expectation of $\widehat{F}^u(y)^k$. In other words:  $\hat{\tau}_e^l(k) \leq \tau_e(k) \leq \hat{\tau}_e^u(k)$.

Because $\forall y, \widehat{F}^l(y) \leq F(y) \leq \widehat{F}^u(y)$ implies $\forall k, \hat{\tau}_e^l(k) \leq \tau_e(k) \leq \hat{\tau}_e^u(k)$, the latter statement must hold with at least the probability of the former. Thus, it holds with probability at least $1-\alpha$.
\end{proof}

Unlike the median tuning curve bands, the mean tuning curve bands will be strictly conservative in general. One reason for this is that the unbounded probability mass will float to the bounds of the distribution's support in the worst case. Another reason is that the CDF of $T_k$ could briefly violate its confidence bounds but still end up with a lower (or higher) mean than the upper (or lower) confidence band. Intuitively, this issue comes from the fact that unlike a quantile, which involves a single point of the CDF, the mean involves the entire shape of the CDF. Exact confidence bounds for the expected tuning curve likely require stronger assumptions.

\section{Experimental Setup}
\label{app:experimental-setup}

Expanding on \S\ref{sec:experimental-setup}, this appendix documents the full experimental details. Our code for the confidence bands and analysis is available at \url{https://github.com/nicholaslourie/opda} (tag: v0.6.1).

\subsection{Comparisons to Prior Work}
\label{app:experimental-setup:comparisons-to-prior-work}

In our comparisons to prior work, we use the data from \citet{tang-etal-2020-showing} located at \url{github.com/castorini/meanmax} (commit: 0ea1241), which is MIT licensed. The data contains hyperparameter searches for two different models: an MLP and an LSTM on the Reuters document classification dataset \citep{apte-etal-1994-automated}. Search was run for 145 iterations on the MLP and 152 iterations on the LSTM, using F1 score as the validation metric.

For the MLP, the search distribution was uniform over \{16, 32, 64\} for the batch size, a learning rate of 0.001, discrete uniform over 0-$10^7$ for the random seed, uniform over $[0.05, 0.7]$ for the dropout rate, 1 hidden layer, and discrete uniform over 256-768 for the hidden dimension.

The LSTM had a nonstandard hyperparameter. For it, "static" initialized word embeddings with frozen word2vec vectors \citep{mikolov2013distributed}, "nonstatic" initialized word embeddings with trainable word2vec vectors, and "rand" initialized word embeddings randomly. For the LSTM, the search distribution was static, nonstatic, or rand with a 40\%, 50\%, and 10\% chance, uniform over \{16, 32, 64\} for batch size, a truncated exponential over $[0.001, 0.099]$ for learning rate, a discrete uniform over 0-$10^7$ for the random seed, 1 or 2 layers with a 75\% and 25\% chance, discrete uniform over 384-768 for hidden dimension, a uniform over $[0, 0.3]$ for weight dropout rate, a uniform over $[0, 0.3]$ for embedding dropout rate, and a uniform over $[0.985, 0.995]$ for the coefficient to use in exponentially averaging the parameters.

For our kernel density estimates (KDEs), we used a Gaussian kernel with reflection about the support's boundary (0 and 1 for F1 score) as a boundary correction \citep{jones1993simple}. We selected the bandwidth from the grid: 1.00\mbox{e-1}, 5.00\mbox{e-2}, 2.50\mbox{e-2}, 1.25\mbox{e-2}, and 6.25\mbox{e-3}, by visually inspecting the PDF and CDF plots for the resulting KDE. Ultimately, we chose 5.00\mbox{e-2} and 1.25\mbox{e-2} for the LSTM and MLP KDE bandwidths, respectively.

In our simulation for computing the bootstrap confidence bands' coverage, we ran 4,096 rounds, in each round testing whether or not the bootstrap confidence bands covered the true tuning curve at each point. To construct the bootstrap confidence bands, we sampled 50 points from the simulated score distribution and then resampled this initial sample 4,096 times to determine the bootstrap distribution's quantiles for the different estimators at each point of the tuning curve.

\subsection{Analysis in a Realistic NLP Scenario}
\label{app:experimental-setup:analysis-in-a-realistic-nlp-scenario}

For our analysis of a realistic NLP scenario, we train DeBERTa \citep{he2021deberta} and DeBERTaV3 \citep{he2021debertav3} using the original DeBERTa codebase: \url{github.com/microsoft/DeBERTa} (commit: c558ad9), which is MIT licensed. We fine-tuned base model sizes on MultiNLI \citep{williams-etal-2018-broad} and evaluated using accuracy. For both models, we ran 1,024 rounds of random search and used the same search distribution for each.

The search distribution was discrete uniform over 16-64 for batch size, discrete uniform over 1-4 for number of epochs, uniform over $[0, 0.6]$ for warmup proportion of the first epoch, log uniform over $[10^{-6}, 10^{-3}]$ for learning rate, and uniform over $[0, 0.3]$ for dropout rate. For ease of comparison, we used the same sample of 1,024 hyperparameters for both models. For other implementation details, we used the fp16 floating point format, a maximum sequence length of 256, an evaluation batch size of 256, and we logged progress every 1000 steps. All other hyperparameters were identical to DeBERTa and DeBERTaV3's defaults for MultiNLI.

Each training run for DeBERTa or DeBERTaV3 on a given set of hyperparameters was run via SLURM with a single RTX 8000 NVIDIA GPU (48 GB), 2 CPU cores from an Intel Xeon Platinum 8268 Processor (2.90 GHz), and 32 GB of CPU memory. For software, the code was run on Ubuntu 18.04.5 LTS with CUDA V10.1.243, Conda 32.1.0, Python 3.7.16, and PyTorch 1.7.0.

Hyperparameter tuning jobs were run in parallel. For DeBERTa-base, the jobs took an average of 4h 57m 57s with a standard deviation of 2h 19m 0s, a minimum of 1h 43m 28s, and a maximum of 11h 9m 8s. For DeBERTaV3-base, the jobs took an average of 4h 8m 30s with a standard deviation of 1h 56m 50s, a minimum of 1h 24m 57s, and a maximum of 9h 29m 38s.

For the tutorial on evaluating DeBERTaV3 (\S\ref{sec:tutorial-evaluating-debertav3}), we subsampled 48 hyperparameter evaluations without replacement from the total 1,024 for both DeBERTa and DeBERTaV3 in order to estimate the tuning curves with confidence bands.

For our kernel density estimates (KDEs), we used a Gaussian kernel. Due to the large sample sizes (1,024 samples for both DeBERTa and DeBERTaV3), the amount of probability mass outside the support's bounds (0 and 1 for accuracy) was smaller than the numerical precision; therefore, we did not need to apply a boundary correction to the KDE. We selected the bandwidth from the grid: 1.0\mbox{e-1}, 7.5\mbox{e-2}, 5\mbox{e-2}, and 2.5\mbox{e-2}, by visually inspecting the PDF and CDF plots for the resulting KDE. Ultimately, we chose 5\mbox{e-2} as the bandwidth for both DeBERTa and DeBERTaV3's simulations.

To estimate the median tuning curve confidence bands' empirical coverage, we ran 1,024 rounds of simulation. In each round, we sampled 48 scores from the kernel density estimate, constructed the median tuning curve confidence bands, and then measured how often the entire median tuning curve was covered simultaneously. Because the CDF and median tuning curve are increasing and their confidence bands are step functions, it is only necessary to check for violations around the discontinuities.

\section{Additional Algorithms}
\label{app:additional-algorithms}

In \S\ref{sec:theory:bounding-the-cdf}, Equation~\ref{eq:l-statistic} defines the test statistic, $L_n$:
$$ L_n \coloneqq \max_{i=1\ldots n} B_{i}\left(F\left(Y_{(i)}\right)\right) $$
Where $B_{i}\left(p\right)$ is coverage under $\text{Beta}(i, n{+}1{-}i)$ of the smallest (equal-tailed or highest density) interval containing $p$. To compute $L_n$ or construct confidence bands for a given value of $L_n$, we then need the ability to compute (equal-tailed or highest density) intervals for the beta distribution, and to compute the coverage of the smallest such interval containing a point $p$. We describe algorithms for these tasks here. For full implementation details, see the code available at \url{https://github.com/nicholaslourie/opda} (tag: v0.6.1).

Recall that when $a > 1$ or $b > 1$, $\text{Beta}(a, b)$ is unimodal, making the highest density interval well-defined. If $a, b \geq 1$, then $p^* = \frac{a - 1}{a + b - 2}$ is the mode. We only consider the distributions $\text{Beta}(i, n{+}1{-}i)$, thus whenever we have two or more samples the distributions will all be unimodal. Many statistical packages make available the beta's density, $g(p)$, CDF, $G(p)$, and inverse CDF or quantile function, $G^{-1}(u)$, so we assume access to these functions.

\subsection{Intervals}
\label{app:additional-algorithms:intervals}

Given a $\text{Beta}(a, b)$ distribution, we must compute intervals containing $1-\alpha$ of the probability mass.

The \textit{equal-tailed interval} puts equal probability in the distribution's tails above and below it. It may be computed directly from the inverse CDF:
$$ \left[G^{-1}\left(\frac{\alpha}{2}\right), G^{-1}\left(1 - \frac{\alpha}{2}\right)\right] $$
The \textit{highest density interval} circumscribes a region where the probability density is highest. It is more challenging to compute. Given a lower end point, $p_l$, we can construct the interval with coverage $1-\alpha$ by setting the upper end point, $p_u$, to:
$$ p_u = G^{-1}\left(G\left(p_l\right) + 1 - \alpha\right) $$
This is the highest density interval with coverage $1-\alpha$ precisely when the density is equal at the end points:
$$ g(p_l) = g(p_u) $$
Thus, we can use any root finding algorithm to identify the value of $p_l$ for which the end points have equal density. In other words, we solve the following equation for $p_l$:
$$ g(p_l) - g\left(G^{-1}\left(G\left(p_l\right) + 1 - \alpha\right)\right) = 0 $$
Our implementation uses the bisection method (i.e., binary search) due to its simplicity and robustness. Letting $p^* = \frac{a - 1}{a + b - 2}$ (the mode), we initialize the lower bound for $p_l$ to:
$$ G^{-1}\left(\max \left\{G\left(p^*\right) - (1 - \alpha), 0\right\}\right) $$
and we initialize the upper bound for $p_l$ to:
$$ \min \left\{ p^*, G^{-1}\left(\alpha\right) \right\} $$
We then use binary search to shrink the interval to within some tolerance.

\subsection{Coverage}
\label{app:additional-algorithms:coverage}

To compute the $L_n$ statistic, we need to compute the coverage under $\text{Beta}(a, b)$ of the smallest equal-tailed or highest density interval containing $p$.

Given $p$, the equal-tailed interval's coverage is:
$$ 2 \left|\frac{1}{2} - G(p)\right| $$
The highest density interval requires finding a root. The smallest highest density interval containing $p$ either has $p$ as the lower or the upper endpoint. We can identify which it is by comparing $p$ with the mode, $p^*$. Consider when $p = p_l$, or $p$ is the lower endpoint. The upper end, $p_u$, must have the same density as the lower, so it satisfies the equation:
$$ g\left(p\right) - g\left(p_u\right) = 0 $$
We could solve this equation using any root finding algorithm. Our implementation uses the bisection method (binary search), initializing the bounds to $0$ and $p^*$ when seeking the lower end, and $p^*$ and $1$ when seeking the upper end.

\begin{figure}[t]
    \centering
    \includegraphics[width=\linewidth]{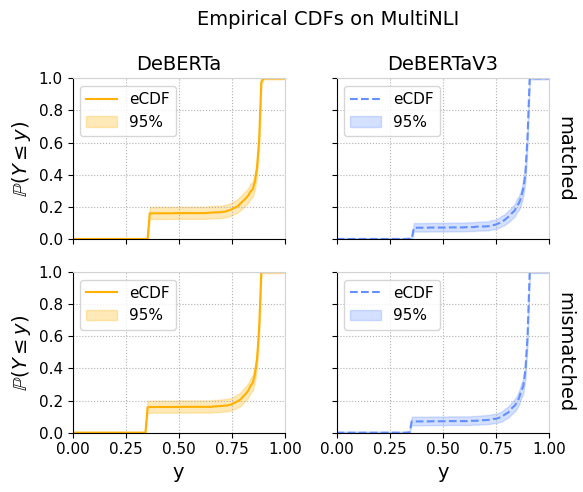}
    \caption{
        Highest density LD bands for the samples of 1,024 search iterations described in \S\ref{sec:experimental-setup} and \S\ref{app:experimental-setup:analysis-in-a-realistic-nlp-scenario}.
    }
    \label{fig:deberta-and-debertav3-cdf-fit}
\end{figure}

We experimented with a number of root finding algorithms (the bisection method, secant method, regula falsi, Newton's method, hybridizations); however, a good implementation of the bisection method proved fastest and most practical due to its reliability. Because we need to find roots for many different beta distributions when constructing the confidence bands, the faster convergence of Newton's method failed to make up for the cost of recovering from failures to converge when performing the algorithm in a vectorized way, though perhaps better implementations could improve upon this.

\begin{figure}[t]
    \centering
    \includegraphics[width=\linewidth]{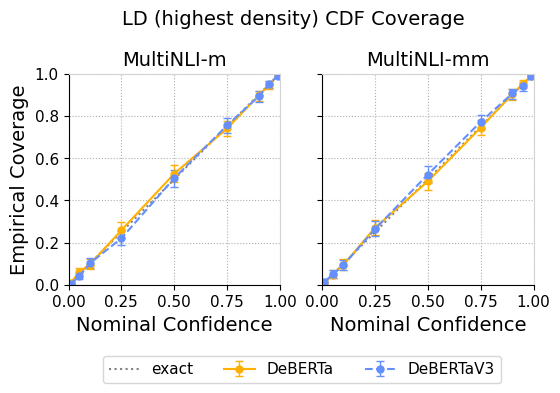}
    \caption{
        Nominal vs. empirical coverage of the LD CDF bands for the score distribution. Error bars show 99\% Clopper-Pearson confidence intervals. The results lie on the $y=x$ line, indicating exact coverage.
    }
    \label{fig:exact-cdf-coverage}
\end{figure}

\section{Exact Coverage}
\label{app:exact-coverage}

In \S\ref{sec:analysis:exact-coverage}, we simulate a practical hyperparameter tuning scenario by first running random search on DeBERTa and DeBERTaV3 1,024 times each, then using the resulting data for kernel density estimates of the true distributions. To realistically simulate these scenarios, we need a sample size that is large enough to guarantee a close approximation to the underlying distributions. Our samples do in fact achieve such approximations, as shown by the simultaneous 95\% confidence bands for the CDFs presented in Figure~\ref{fig:deberta-and-debertav3-cdf-fit}. Since the confidence bands are so narrow, we can conclude the samples' eCDFs adhere tightly to the true CDFs.

In addition to the LD tuning curve confidence bands, we empirically validated the LD CDF bands' coverage. Figure~\ref{fig:exact-cdf-coverage} confirms the LD CDF bands, and in particular our implementation via inverting the hypothesis test, achieve exact coverage.

\section{Ablation Studies}
\label{app:ablation-studies}

Expanding on \S\ref{sec:analysis:ablations}, we provide extended ablation studies. In general, the conclusions mirror those of \S\ref{sec:analysis:ablations}. Figure~\ref{fig:dkw-vs-ks-vs-ld-cdf-bands} and Figure~\ref{fig:equal-tailed-vs-highest-density-cdf-bands} introduce new results respectively by comparing the CDF bands (as opposed to tuning curves) from the DKW, KS, and LD methods and the equal-tailed and highest density intervals. Figure~\ref{fig:dkw-vs-ks-vs-ld-tuning-curve-bands} and Figure~\ref{fig:equal-tailed-vs-highest-density-tuning-curve-bands} expand on the existing tuning curve results by comparing the DKW, KS, and LD methods and the equal-tailed and highest density intervals for more sample sizes. For even more results, see the Jupyter notebooks at \url{https://github.com/nicholaslourie/opda} (tag: v0.6.1).

\begin{figure*}
    \centering
    \includegraphics[width=0.95\textwidth]{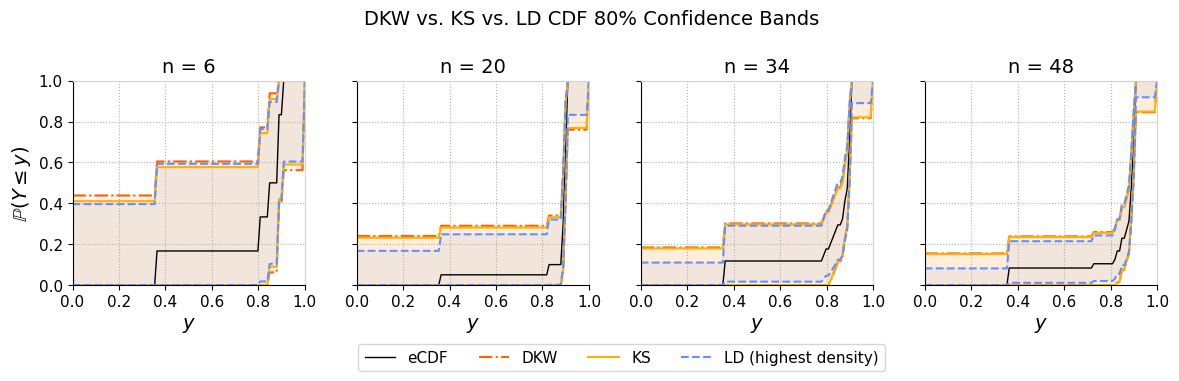}
    \caption{
        DKW, KS, and LD confidence bands for DeBERTaV3's score distribution on MultiNLI (matched).
    }
    \label{fig:dkw-vs-ks-vs-ld-cdf-bands}
\end{figure*}

\begin{figure*}
    \centering
    \includegraphics[width=0.95\textwidth]{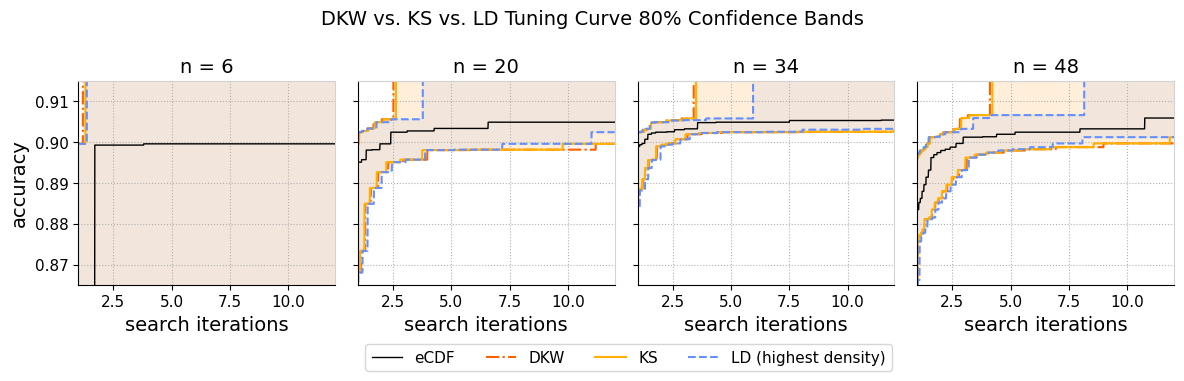}
    \caption{
        DKW, KS, and LD confidence bands for DeBERTaV3's tuning curve on MultiNLI (matched).
    }
    \label{fig:dkw-vs-ks-vs-ld-tuning-curve-bands}
\end{figure*}

\begin{figure*}
    \centering
    \includegraphics[width=0.95\textwidth]{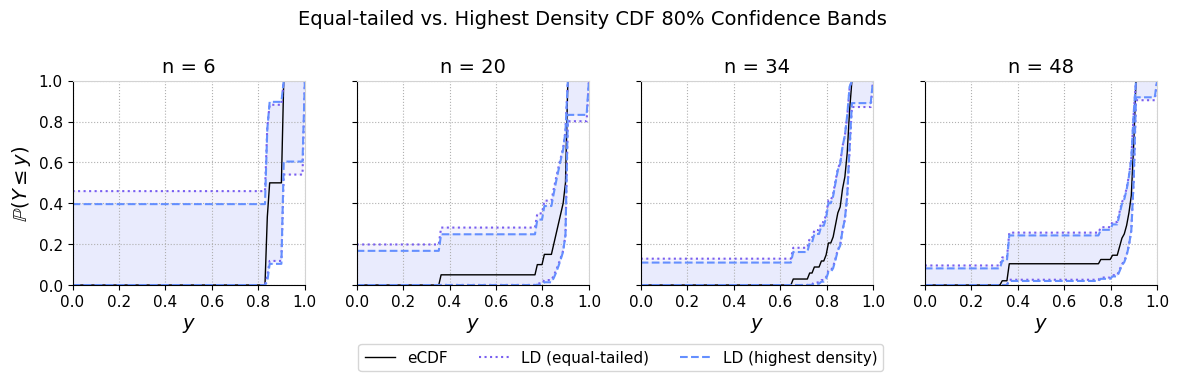}
    \caption{
        Equal-tailed and highest density LD bands for DeBERTaV3's score distribution on MultiNLI (matched).
    }
    \label{fig:equal-tailed-vs-highest-density-cdf-bands}
\end{figure*}

\begin{figure*}
    \centering
    \includegraphics[width=0.95\textwidth]{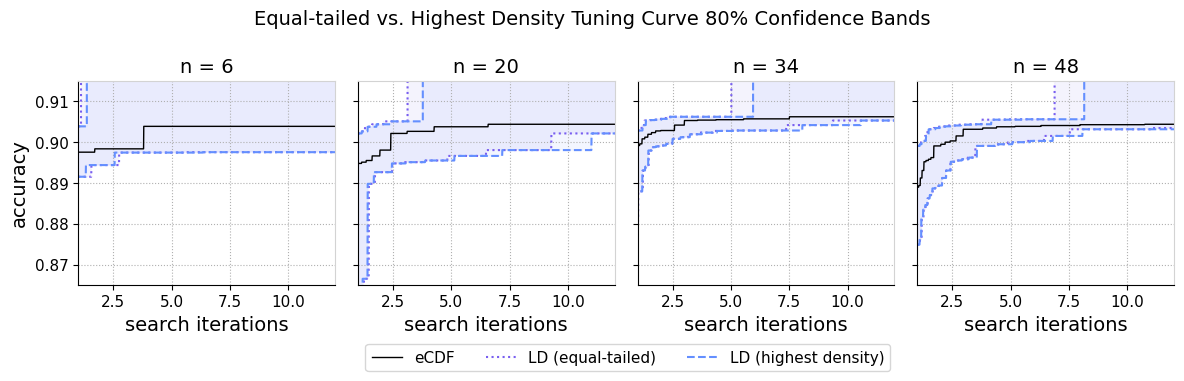}
    \caption{
        Equal-tailed and highest density LD bands for DeBERTaV3's tuning curve on MultiNLI (matched).
    }
    \label{fig:equal-tailed-vs-highest-density-tuning-curve-bands}
\end{figure*}

\section{Extended Results}
\label{app:extended-results}

In our experiments, we used between 6 and 48 iterations of random search to keep things realistic for research and practical applications. At the same time, we ran many more iterations to make the exact coverage analysis as rigorous as possible: 1,024 for both DeBERTa and DeBERTaV3. Since these large samples are already available, we now run extended analyses to satisfy our own curiosity and that of the interested reader.

One question is: what do turning curves look like in the large sample limit? Figure~\ref{fig:debertav3-tuning-curve_large-sample} answers this for DeBERTaV3. It is similar to Figure~\ref{fig:debertav3-tuning-curve} but using all 1,024 search iterations. Continuing in this way, Figure~\ref{fig:deberta-vs-debertav3_large-sample} is similar to Figure~\ref{fig:deberta-vs-debertav3}, but again using all 1,024 iterations of random search. The results remain largely the same. High performance is achieved after 6 or so iterations, and it more or less saturates after 12 to 24---though performance continues to rise very slowly up into the hundreds.

We can also revisit our hyperparameter analysis with larger samples. Consider the hyperparameter importance. Figure~\ref{fig:epoch-importance_large-sample} uses 245 search iterations to compare tuning the epochs against fixing it at the default. The tighter bands agree with our previous findings. There is weak evidence that the default beats tuning epochs initially, but the difference is small. The larger sample size also provides enough statistical power for a finer-grained analysis of the epochs' \textit{effect}. By fixing epochs to various values, we can visualize its impact on the tuning curve. In this way, Figure~\ref{fig:epoch-effect_large-sample} shows that 1 or 2 epochs is too low, while 3 or 4 epochs appears to be enough for fine-tuning DeBERTaV3 on MultiNLI.

\begin{figure}[h]
    \centering
    \includegraphics[width=\linewidth]{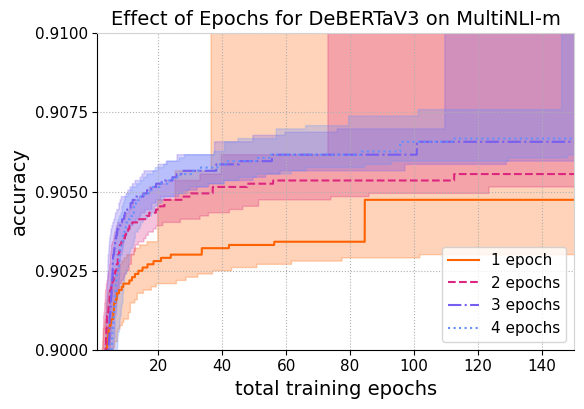}
    \caption{
        Median tuning curves for DeBERTaV3 on MultiNLI (matched), with 80\% confidence based on 245 search iterations each. Fixing epochs to different values visualizes the hyperparameter's effect.
    }
    \label{fig:epoch-effect_large-sample}
\end{figure}

\begin{figure}[h]
    \centering
    \includegraphics[width=\linewidth]{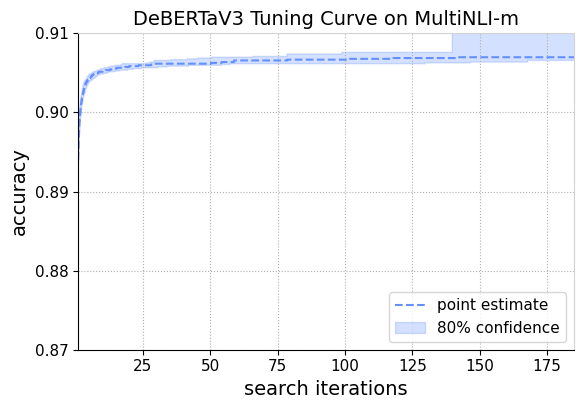}
    \caption{
        The median tuning curve for DeBERTaV3 on MultiNLI (matched), based on 1,024 search iterations.
    }
    \label{fig:debertav3-tuning-curve_large-sample}
\end{figure}

\begin{figure}[h]
    \centering
    \includegraphics[width=\linewidth]{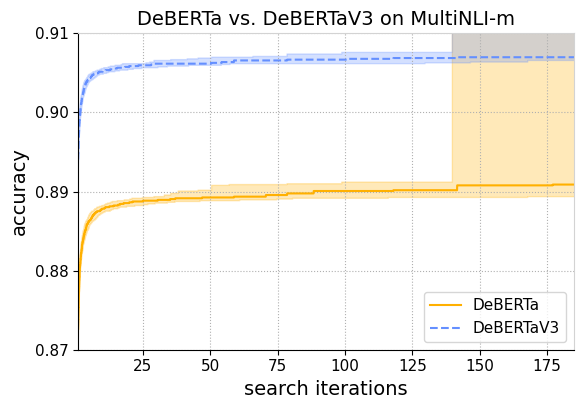}
    \caption{
        Median tuning curves for DeBERTaV3 and DeBERTa on MultiNLI (matched), with 80\% confidence based on 1,024 search iterations.
    }
    \label{fig:deberta-vs-debertav3_large-sample}
\end{figure}

\begin{figure}[h]
    \centering
    \includegraphics[width=\linewidth]{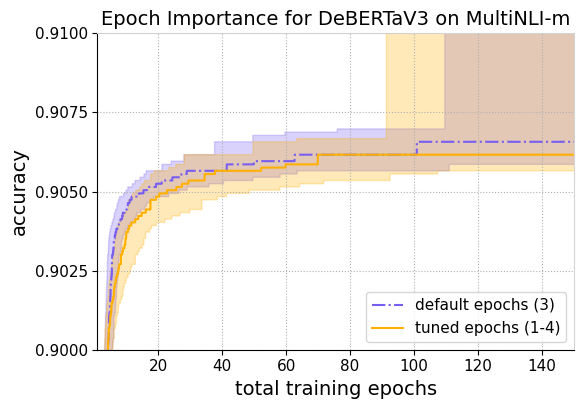}
    \caption{
        Median tuning curves for DeBERTaV3 on MultiNLI (matched), with 80\% confidence based on 245 search iterations. To assess hyperparameter importance, the curves compare tuning epochs vs. using the default.
    }
    \label{fig:epoch-importance_large-sample}
\end{figure}

\end{document}